\documentclass[abstract=true]{scrartcl}

\usepackage{amsmath}
\usepackage{amsthm}
\usepackage{amssymb}
\usepackage{amsfonts}
\usepackage{hyperref}
\usepackage{color}
\usepackage{dsfont}
\usepackage{enumitem}
\usepackage{algorithmic}
\usepackage{algorithm}
\floatname{algorithm}{\bfseries\sffamily{Algorithm}}


\usepackage{tikz-cd}
\tikzcdset{every label/.append style = {font = \large}}


\usepackage[numbers]{natbib}

\bibliographystyle{bibliographystyle}


\newtheoremstyle{curs} 
  {\topsep}            
  {\topsep}            
  {\itshape}           
  {0pt}                
  {\bfseries\sffamily} 
  {.}                  
  { }                  
  {}                   
  
\newtheoremstyle{ncurs} 
  {\topsep}             
  {\topsep}             
  {}                    
  {0pt}                 
  {\bfseries\sffamily}  
  {.}                   
  { }                   
  {}                    

\theoremstyle{ncurs}
\newtheorem{dfn}{Definition}
\newtheorem{rmk}[dfn]{Remark}
\newtheorem{exa}[dfn]{Example}

\theoremstyle{curs}
\newtheorem{lem}[dfn]{Lemma}
\newtheorem{prop}[dfn]{Proposition}

\newtheorem{thm}[dfn]{Theorem}


\def\R{\mathbb{R}}   
\def\O{\mathcal{O}}  
\def\cX{\mathcal{X}}
\def\cY{\mathcal{Y}}
\def\cZ{\mathcal{Z}}
\def\cH{\mathcal{H}}
\def\cL{\mathcal{L}}
\def\e{\mathrm{e}}   

\newcommand{\norm}[1]{\Vert#1\Vert}

\newcommand{\inner}[1]{\langle#1\rangle}

\newcommand{\id}{\mathrm{id}}

\newcommand{\diag}{\mathrm{diag}}
\newcommand{\tr}{\mathrm{tr}}

\newcommand{\rmD}{\mathrm{D}}
\newcommand{\calL}{\mathcal{L}}


\title{A Derivation of Feedforward Neural Network Gradients Using Fr\'echet Calculus}
\author{Thomas Hamm\\
\normalsize{}Institute for Stochastics and Applications\\
\normalsize{}Faculty 8: Mathematics and Physics\\
\normalsize{}University of Stuttgart\\
\normalsize{}\texttt{\href{mailto:thomas.hamm@mathematik.uni-stuttgart.de}{thomas.hamm@mathematik.uni-stuttgart.de}}}


\begin{document}
\maketitle
\begin{abstract}
We present a derivation of the gradients of feedforward neural networks using Fr\'echet calculus which is arguably more compact than the ones usually presented in the literature. We first derive the gradients for ordinary neural networks working on vectorial data and show how these derived formulas can be used to derive a simple and efficient algorithm for calculating a neural networks gradients. Subsequently we show how our analysis generalizes to more general neural network architectures including, but not limited to, convolutional networks.
\end{abstract}

\section{Preface}

The derivation of the gradients of feedforward neural networks in the literature is usually based on applying the chain rule coordinate-wise at some internal neuron, see for example \cite[Section 6.5]{Goodfellow_DeepLearning} or \cite[Section 11.4]{hastie2009elements}. This approach is already rather cumbersome for the most basic type of neural networks, consisting of a matrix vector product followed by a bias and a non-linearity at each layer, and consequently does not easily generalize to more complex neural networks such as convolutional networks. By using the theory of Fr\'echet calculus we present a derivation of neural network gradients, which is arguably more compact in the case of standard neural networks and also easily generalizes to more complex architectures by exploiting that each layer consists of an affine bilinear transformation
The rest of these notes are organized as follows: In Section \ref{frechet_calc} we introduce the notion of Fr\'echet derivatives including basic properties and an example. In Section \ref{sec:neural_networks} we present our derivation for ordinary neural networks, that is, where each layer transformation is defined by a matrix vector product and a bias followed by a pointwise non-linearity. Subsequently in Section \ref{sec:conv_layers}, we generalize our analysis to arbitrary affine bilinear layer transformations.
\paragraph*{Notation} We denote the set of real numbers by $\R$ and the usual $n$-dimensional Euclidean space by $\R^n$. We denote the set of real $n\times m$-matrices by $\R^{n\times m}$ and for $A\in\R^{n\times m}$ we denote the transposed matrix by $A^\top$. We denote the trace of a square matrix $A$ by $\tr A$ and for some $x\in\R^n$ we denote the diagonal $n\times n$-matrix with entries given by $x$ by $\diag (x)$. Furthermore, we denote the pointwise product, also known as Hadamard product, of two vectors $x,y$ by $x\odot y$. Finally, given two normed spaces $\cX$ and $\cY$ we denote the space of bounded, linear operators from $\cX$ to $\cY$ by $\cL(\cX,\cY)$.
\section{Fr\'echet Calculus}\label{frechet_calc}
\begin{dfn}
Let $\cX, \cY$ be normed spaces  and let $U\subset \cX$ be open. A function $f:U\to \cY$ is called Fr\'echet differentiable at $x\in U$ if there exists a bounded, linear map $\rmD f(x)\in\calL(\cX,\cY)$ such that
\begin{equation*}
\lim_{\norm{h}_\cX\to0} \frac{\norm{f(x+h)-f(x)-\rmD f(x)h}_\cY}{\norm{h}_\cX}=0.
\end{equation*}
Further, $f$ is called Fr\'echet differentiable if $f$ is Fr\'echet differentiable at every $x\in U$.
\end{dfn}

If $U\subset\R^m$ is open and $f:U\to\R^m$ is differentiable then the Jacobian
\begin{equation*}
J_f(x)=\begin{pmatrix}
\frac{\partial f_1(x)}{\partial x_1} & \cdots & \frac{\partial f_1(x)}{\partial x_n} \\
\vdots                               & \ddots & \vdots \\
\frac{\partial f_m(x)}{\partial x_1} & \cdots & \frac{\partial f_m(x)}{\partial x_n} 
\end{pmatrix}, \quad x\in U,
\end{equation*}
is the representing matrix of the linear map $\rmD f(x)\in\calL(\R^n,\R^m)$ with respect to the standard bases in $\R^n$ and $\R^m$.
\paragraph*{Chain Rule.} Let $\cX, \cY, \cZ$ be normed spaces, $U\subset \cX, V\subset \cY$ be open subsets and $f:U\to V$, $g:V\to \cZ$. If $f$ is Fr\'echet differentiable at $x\in U$ and $g$ is Fr\'echet differentiable at $f(x)\in V$ then
\begin{equation}
\rmD (g\circ f)(x)=\rmD g(f(x))\circ \rmD f(x),\label{eqn:chain_rule}
\end{equation}
see \citep[Theorem 2.1]{Coleman_CalculusOnNormed}. Again, if $\cX,\cY,\cZ$ are finite dimensional Euclidean spaces then Equation (\ref{eqn:chain_rule}) translates to the classical chain rule
\begin{equation*}
J_{g\circ f}(x)=J_g(f(x))\cdot J_f(x)
\end{equation*}
in terms of Jacobians.

Fr\'echet derivatives generalize the notion of a derivative to functions on infinite dimensional spaces, as there are in general no coordinates available but are also especially convenient when dealing with functions that have matrix-valued inputs and/or outputs. To demonstrate this usefulness and to get some intuition on the formalism of Fr\'echet calculus we give an example.
\begin{exa}

{ }

\begin{enumerate}
\item Consider the map $f:\R^{n\times k}\times \R^{k\times m}\to \R^{n\times m}$ defined by $f(A,B):=A\cdot B$. For $(H_1,H_2)\in\R^{n\times k}\times \R^{k\times m}$ we have
\begin{align*}
f(A+H_1,B+H_2)&=AB+AH_2+H_1B+H_1H_2 \\
&=f(A,B)+AH_2+H_1B+o(\norm{(H_1,H_2)}),
\end{align*}
as the entries of $H_1\cdot H_2$ only contain terms of second order. Since $(H_1,H_2)\mapsto AH_2+H_1B$ is linear we can immediately conclude that
\begin{equation*}
\rmD f(A,B)(H_1,H_2)=AH_2+H_1B.
\end{equation*}\label{exa:frechet_i}
\item Let $\mathrm{GL}(n)\subset\R^{n\times n}$ be the set of invertible $n\times n$ matrices and define $g:\mathrm{GL}(n)\to \mathrm{GL}(n)$ by $g(A):=A^{-1}$. To find the Fr\'echet derivative of $g$, we define the map $h:\mathrm{GL}(n)\to\R^{n\times n}\times \R^{n\times n}$ by $h(A):=(A,A^{-1})$. Now note that the composition $f\circ h$, where $f$ is defined as above, is constant which implies $\rmD f\circ h=0$ (in the sense of null map). On the other hand, differentiating $f\circ h$ using the chain rule we get
\begin{align*}
0&=\rmD f\circ h(A)(H)=\rmD f(h(A))\circ \rmD h(A)(H)=\rmD f(A,A^{-1})(H,\rmD g(A)(H)) \\
&=A\rmD g(A)(H)+HA^{-1}.
\end{align*}
Solving for $\rmD g(A)$ gives $\rmD g(A)(H)=-A^{-1}HA^{-1}$.
\end{enumerate}
\end{exa}
\begin{rmk}\label{rmk:duality}
If $\cH$ is a (real) Hilbert space, $U\subset\cH$ is open and $f:U\to\R$ is Fr\'echet differentiable at $x\in U$, then by the Riesz representation theorem there exists a unique element $\varphi\in \cH$ satisfying $\langle \varphi, h\rangle_\cH=\rmD f(x)h$ for all $h\in \cH$. In the following, we will often identify $\rmD f(x)$ with $\varphi$. As an example, recall that the Frobenius inner product on the space of matrices $\R^{n\times m}$ is defined by $\langle A,B\rangle =\tr(A^\top B)$ and that this inner product induces the standard Euclidean norm on $\R^{n\times m}$. A Fr\'echet differentiable function $f:\R^{n\times m}\to\R$ also has a derivative and a gradient in the classical sense. By the duality above, if we have determined the Fr\'echet derivative $\rmD f(x)$ and a matrix $G\in\R^{n\times m}$ satisfying $\tr(G^\top H)=\rmD f(x)(H)$ for all $H\in\R^{n\times m}$, we can conclude that $G$ is the gradient of $f$ at $x$ in the classical sense.
\end{rmk}
\paragraph*{Partial Derivatives.} Let $\cX_1,\ldots,\cX_n,\cY$ be normed spaces, let $U_k\subset\cX_k,k=1,\ldots,n$ be open subsets and let $f:\cX_1\times\ldots\times \cX_n\to\cY$. For $k=1,\ldots,n$ we denote the Fr\'echet derivative of $f$ with respect to the $k$-th variable (and the remaining variables held fixed) by $\rmD_k f(x_1,\ldots,x_n)\in\cL(\cX_k,\cY)$. This notation will be important when differentiating expressions of the form $f(g(x_1),x_2)$. For example, if we want to consider the Fr\'echet derivative of this expression with respect to the variable $x_1$, we denote this by $\rmD_{x_1}f(g(x_1),x_2)$ and by the chain rule we have $\rmD_{x_1}\big(f(g(x_1),x_2)\big)=(\rmD_1 f)(g(x_1),x_2)\circ \rmD g(x_1)$, where we have added additional parentheses for clarification.
\section{Neural Networks for Vectorial Data}\label{sec:neural_networks}
Let $W_k\in\R^{d_k\times d_{k-1}} $ and $ b_k\in \R^{d_k}$ for $k=1,\ldots,n$. We refer to the $W_1,\ldots,W_n$ as \emph{weights} and to the $b_1,\ldots,b_n$ as \emph{biases}. We define the layer-wise transformations $f_k:\R^{d_{k-1}}\times \R^{d_k\times d_{k-1}}\times\R^{d_k}\to \R^{d_k}$ by $f_k(x,W_k,b_k):=\sigma_k(W_kx+b_k)$, where $\sigma_k:\R\to\R$ is a non-linearity (or activation function) which is applied coordinate-wise. We further use the abbreviation $Z_k:=(W_1,b_1,\ldots,W_k,b_k)$ for the collection of weights and biases up to layer $k$ and define the activation in the $k$-th layer recursively by $F_0(x)=x$ for $x\in \R^{d_0}$ and 
\begin{equation}
F_k(x,Z_k):=f_k(F_{k-1}(x,Z_{k-1}),W_k,b_k) \quad \text{ for } x\in\R^{d_0} \text{ and } k=1,\ldots,n.\label{eqn:layer_activations}
\end{equation}
The following lemma, which is an immediate consequence of the chain rule, states the derivatives of the layer-wise transformations $f_k$.
\begin{lem}\label{lem:derivates_layers}
For all $k=1,\ldots,n$ we have
\begin{align*}
\rmD_1 f_k(x,W_k,b_k)(h)&=\diag\big(\sigma_k'(W_kx+b_k)\big)W_kh, \quad h\in \R^{d_{k-1}},\\
\rmD_2 f_k(x,W_k,b_k)(H)&=\diag\big(\sigma_k'(W_kx+b_k)\big)Hx, \quad H\in\R^{d_k\times d_{k-1}}, \\
\rmD_3 f_k(x,W_k,b_k)(h)&=\diag\big(\sigma_k'(W_kx+b_k)\big)h, \quad h\in \R^{d_{k}}.
\end{align*}
\end{lem}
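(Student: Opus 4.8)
The plan is to exhibit each $f_k$ as a composition of an affine map followed by the coordinate-wise nonlinearity and then invoke the chain rule \eqref{eqn:chain_rule}. Concretely, write $f_k=\Sigma_k\circ a_k$, where
\[
a_k:\R^{d_{k-1}}\times\R^{d_k\times d_{k-1}}\times\R^{d_k}\to\R^{d_k},\qquad a_k(x,W,b):=Wx+b,
\]
is the pre-activation map and $\Sigma_k:\R^{d_k}\to\R^{d_k}$ is the map applying $\sigma_k$ in each coordinate, i.e.\ $\Sigma_k(z)_j=\sigma_k(z_j)$. Since the three partial derivatives of $a_k$ and the derivative of $\Sigma_k$ are easy to obtain directly, the claimed formulas then follow by composing them.

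First I would compute the partials of $a_k$. For fixed $W$ and $b$ the map $x\mapsto Wx+b$ is affine with linear part $h\mapsto Wh$, hence $\rmD_1 a_k(x,W,b)(h)=Wh$. For fixed $x$ and $b$ the map $W\mapsto Wx+b$ is affine, and its linear part $H\mapsto Hx$ is precisely the derivative of the matrix product with respect to its left factor computed in Example~\ref{exa:frechet_i} (viewing $x$ as a $d_{k-1}\times 1$ matrix), so $\rmD_2 a_k(x,W,b)(H)=Hx$. For fixed $x$ and $W$ the map $b\mapsto Wx+b$ is affine with linear part the identity, so $\rmD_3 a_k(x,W,b)(h)=h$. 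Next, since $\Sigma_k$ acts coordinate-wise and each scalar component $t\mapsto\sigma_k(t)$ is differentiable, the Jacobian of $\Sigma_k$ at $z$ is the diagonal matrix $\diag(\sigma_k'(z))$; by the identification of the Jacobian with the Fr\'echet derivative in finite dimensions this gives $\rmD\Sigma_k(z)(v)=\diag(\sigma_k'(z))\,v=\sigma_k'(z)\odot v$.

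It then remains to combine these via the chain rule: for $i\in\{1,2,3\}$,
\[
\rmD_i f_k(x,W_k,b_k)=\rmD\Sigma_k\big(a_k(x,W_k,b_k)\big)\circ\rmD_i a_k(x,W_k,b_k)=\diag\big(\sigma_k'(W_kx+b_k)\big)\circ\rmD_i a_k(x,W_k,b_k),
\]
and substituting the three expressions for $\rmD_i a_k$ found above yields exactly the three formulas in the statement. None of the individual steps is hard; the only point that requires a little care is the Fr\'echet differentiability of the coordinate-wise map $\Sigma_k$ with the stated diagonal derivative — one should check that the linear-remainder estimate for $\Sigma_k$ follows coordinate by coordinate from the differentiability of the scalar $\sigma_k$, which is where the (implicit) standing assumption that each $\sigma_k$ is differentiable enters. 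Everything else is the bilinearity of $(W,x)\mapsto Wx$ already recorded in the Example together with a single application of \eqref{eqn:chain_rule}.
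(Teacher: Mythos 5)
Your proposal is correct and follows essentially the same route as the paper: the paper also introduces the pre-activation map $\Phi(x,W_k,b_k)=W_kx+b_k$, records its three partial derivatives by linearity, and then applies the chain rule with the coordinate-wise nonlinearity. Your write-up merely makes explicit the step the paper leaves implicit, namely that the Fr\'echet derivative of the coordinate-wise map $\Sigma_k$ at $z$ is $\diag(\sigma_k'(z))$.
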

\begin{proof}
First define $\Phi(x,W_k,b_k):=W_kx+b_k$. Note that by linearity of the respective operations we have 
\begin{align*}
\rmD_x \Phi(x,W_k,b_k)h&= W_kh, \\
\rmD_{W_k} \Phi(x,W_k,b_k)H&=Hx, \\
\rmD_{b_k} \Phi(x,W_k,b_k)h&=h.
\end{align*}
The assertion now immediately follows by applying the chain rule.
\end{proof}
The expressions in Lemma \ref{lem:derivates_layers} can also be formulated using the Hadamard product instead of multiplication with diagonal matrices. The disadvantage of the Hadamard product however is, that it is not associative when combined with ordinary matrix multiplication. To simplify our subsequent analysis by not having to consider the order of operation, we prefer the usage of diagonal matrices. In the following proposition we combine Lemma \ref{lem:derivates_layers} with the chain rule to derive the gradients of the neural network with respect to an arbitrary weight and bias.
\begin{prop}\label{prop:gradients_preloss}
Let $D_k:=\diag\big(\sigma_k'(W_kF_{k-1}(x,Z_{k-1})+b_k)\big) $ for $ k=1,\ldots,n$. Then for all $k=1,\ldots,n$ we have
\begin{align*}
\rmD_{W_k}F_n(x,Z_n)(H)&=D_nW_n\cdot\ldots\cdot D_{k+1}W_{k+1} D_kHF_{k-1}(x,Z_{k-1}), \\
\rmD_{b_k}F_n(x,Z_n)(h)&=D_nW_n\cdot\ldots\cdot D_{k+1}W_{k+1} D_kh.
\end{align*}
\end{prop}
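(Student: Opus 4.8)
The plan is to prove both formulas simultaneously by induction on the number of layers from $k$ up to $n$, using the recursive definition~\eqref{eqn:layer_activations} together with the chain rule and Lemma~\ref{lem:derivates_layers}. Fix $k\in\{1,\ldots,n\}$; I will differentiate $F_n(x,Z_n)$ with respect to $W_k$, the argument for $b_k$ being entirely analogous (one simply replaces the third component $\rmD_3 f_k$ by, and notes that only the $D_k H F_{k-1}$ factor changes to $D_k h$).

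First I would unwind one layer: since $F_n(x,Z_n)=f_n\big(F_{n-1}(x,Z_{n-1}),W_n,b_n\big)$ and $W_k$ (for $k<n$) enters only through the first argument $F_{n-1}(x,Z_{n-1})$, the chain rule gives
\begin{equation*}
\rmD_{W_k}F_n(x,Z_n)=(\rmD_1 f_n)\big(F_{n-1}(x,Z_{n-1}),W_n,b_n\big)\circ \rmD_{W_k}F_{n-1}(x,Z_{n-1}).
\end{equation*}
By Lemma~\ref{lem:derivates_layers} the first factor is $H'\mapsto D_n W_n H'$, so this peels off a leading $D_n W_n$ and reduces the problem to $\rmD_{W_k}F_{n-1}(x,Z_{n-1})$. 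Iterating this step down to layer $k+1$ produces the product $D_n W_n\cdots D_{k+1}W_{k+1}$ acting on $\rmD_{W_k}F_k(x,Z_k)$. For the base case at layer $k$, the weight $W_k$ now appears in the \emph{second} slot of $f_k$ (the activation $F_{k-1}(x,Z_{k-1})$ does not depend on $W_k$), so $\rmD_{W_k}F_k(x,Z_k)(H)=\rmD_2 f_k\big(F_{k-1}(x,Z_{k-1}),W_k,b_k\big)(H)=D_k H F_{k-1}(x,Z_{k-1})$, again by Lemma~\ref{lem:derivates_layers}. Composing the collected factors yields exactly the claimed expression.

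To make the induction clean I would actually state the induction hypothesis in the form: for every $j$ with $k\le j\le n$,
\begin{equation*}
\rmD_{W_k}F_j(x,Z_j)(H)=D_j W_j\cdots D_{k+1}W_{k+1}\,D_k H\,F_{k-1}(x,Z_{k-1}),
\end{equation*}
with the convention that the product $D_jW_j\cdots D_{k+1}W_{k+1}$ is empty (the identity) when $j=k$. The base case $j=k$ is the second identity of Lemma~\ref{lem:derivates_layers} as above; the inductive step $j-1\rightsquigarrow j$ is the one-layer peeling computation, using that $W_k$ occurs in $f_j$ (for $j>k$) only via its first argument. The bias case is identical with the final two factors $D_k H F_{k-1}(x,Z_{k-1})$ replaced throughout by $D_k h$, since $b_k$ enters $f_k$ through its third slot and $\rmD_3 f_k(\cdot)(h)=D_k h$.

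I do not expect a genuine obstacle here; the only points that need care are bookkeeping ones. One must be careful that when differentiating $F_j$ with respect to $W_k$ for $j>k$, the variable $W_k$ is contained in $Z_{j-1}$ and thus only affects the first argument of $f_j$, which justifies using $\rmD_1 f_j$ alone — whereas at $j=k$ it is $\rmD_2 f_k$ (for the weight) or $\rmD_3 f_k$ (for the bias) that is relevant. The other subtlety is purely notational: the composition of the linear maps $H'\mapsto D_j W_j H'$ is just ordinary matrix multiplication, so the chain of compositions collapses to the matrix product $D_n W_n\cdots D_{k+1}W_{k+1}$ acting on the vector $D_k H F_{k-1}(x,Z_{k-1})$ (respectively $D_k h$), and associativity of matrix multiplication — deliberately preserved by the diagonal-matrix convention discussed after Lemma~\ref{lem:derivates_layers} — makes the resulting expression unambiguous.
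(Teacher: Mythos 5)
Your proposal is correct and follows essentially the same route as the paper: peel off layers above $k$ via the chain rule using $\rmD_1 f_m$ (which acts as $D_mW_m$), with base case $\rmD_2 f_k$ (resp.\ $\rmD_3 f_k$) at layer $k$, then invoke Lemma~\ref{lem:derivates_layers}. The only difference is that you phrase the iteration as an explicit induction on the layer index, which the paper leaves implicit.
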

\begin{proof}
For better readability we will suppress the arguments of the function $F_k$ occasionally. Note that 
\begin{equation*}
\rmD_{W_k}F_m(x,Z_m)=\rmD_{W_k} f_m(F_{m-1},W_m,b_m)=\rmD_1 f_m(F_{m-1},W_m,b_m)\circ \rmD_{W_k} F_{m-1}(x,Z_{m-1})
\end{equation*}
for $m>k$ and $\rmD_{W_k}F_k(x,Z_k)=\rmD_2 f_k(F_{k-1},W_k,b_k)$. Now applying the chain rule gives
\begin{align*}
&\rmD_{W_k}F_n(x,Z_n) \\
=\,&\rmD_1 f_n(F_{n-1},W_n,b_n)\circ\ldots\circ \rmD_1 f_{k+1}(F_{k},W_{k+1},b_{k+1})\circ \rmD_2 f_k(F_{k-1},W_k,b_k).
\end{align*}
Combining this with Lemma \ref{lem:derivates_layers} yields the assertion. $\rmD_{b_k} F_n(x,Z_n)$ can be derived analogously.
\end{proof}
Let $\ell:\R^{d_n}\times\R^{d_n}\to\R$ be a loss function, that is the discrepancy between our prediction $F_n(x,Z_n)$ and the true label $y\in\R^{d_n}$ is measured by $\ell(y,F_n(x,Z_n))$. Let
\begin{equation*}
L(y,t):=\left( \frac{\partial \ell (y,t)}{\partial t_1},\ldots,\frac{\partial \ell (y,t)}{\partial t_{d_n}} \right) \in\R^{1\times d_n}
\end{equation*}
be the Jacobian of the map $\R^{d_n}\ni t\mapsto \ell(y,t)\in\R$ for fixed $y\in\R^{d_n}$. In other words, we have $\rmD_2\ell(y,t)h=L(y,t)\cdot h$ for all $h\in\R^{d_n}$.
\begin{exa}
If $\ell$ is the least squares loss defined by $\ell(y,t):=\norm{y-t}^2$ for $y,t\in\R^{d_n}$, then we have $L(y,t)=2(t-y)^\top$.
\end{exa}
Finally, we can combine our analysis so far to derive the gradients of a loss of a neural network.
\begin{thm}\label{thm:nn_grads}
For all $k=1,\ldots,n$ we have
\begin{align*}
\rmD_{W_k}\ell(y,F_n(x,Z_n))^\top&=F_{k-1}(x,Z_{k-1})L(y,F_n(x,Z_n))D_nW_n\cdot\ldots\cdot D_{k+1}W_{k+1} D_k, \\
\rmD_{b_k} \ell(y,F_n(x,Z_n))^\top &= L(y,F_n(x,Z_n))D_nW_n\cdot\ldots\cdot D_{k+1}W_{k+1} D_k. 
\end{align*}
\end{thm}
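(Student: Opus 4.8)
The plan is to apply the chain rule in order to peel off the loss function, invoke Proposition \ref{prop:gradients_preloss} for the remaining derivative of the network, and then use the duality of Remark \ref{rmk:duality} to pass from the Fr\'echet derivative, which a priori is a linear functional, to the classical gradient, which is a matrix.

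First I would fix $x$ and $y$, regard $W_k \mapsto \ell(y,F_n(x,Z_n))$ as a composition, and differentiate it with respect to $W_k$ using the chain rule together with the partial-derivative notation introduced above, obtaining
\begin{equation*}
\rmD_{W_k}\big(\ell(y,F_n(x,Z_n))\big) = \rmD_2\ell(y,F_n(x,Z_n)) \circ \rmD_{W_k}F_n(x,Z_n).
\end{equation*}
By the defining property of $L$ we have $\rmD_2\ell(y,t)h = L(y,t)h$ for all $h\in\R^{d_n}$, and by Proposition \ref{prop:gradients_preloss} we have $\rmD_{W_k}F_n(x,Z_n)(H) = D_nW_n\cdots D_{k+1}W_{k+1}D_kHF_{k-1}(x,Z_{k-1})$. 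Substituting the latter into the former yields, for every $H\in\R^{d_k\times d_{k-1}}$, the scalar
\begin{equation*}
\rmD_{W_k}\big(\ell(y,F_n(x,Z_n))\big)(H) = L(y,F_n(x,Z_n))D_nW_n\cdots D_{k+1}W_{k+1}D_kHF_{k-1}(x,Z_{k-1}).
\end{equation*}

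The key step is then to rewrite this scalar as a Frobenius inner product with $H$. Abbreviating $v^\top := L(y,F_n(x,Z_n))D_nW_n\cdots D_{k+1}W_{k+1}D_k \in \R^{1\times d_k}$ and $w := F_{k-1}(x,Z_{k-1})\in\R^{d_{k-1}}$, the expression above is $v^\top H w = \tr(v^\top H w) = \tr(wv^\top H) = \tr\big((vw^\top)^\top H\big)$, where the middle equality uses the cyclic invariance of the trace. By Remark \ref{rmk:duality} the matrix $vw^\top\in\R^{d_k\times d_{k-1}}$ is therefore the classical gradient of $W_k \mapsto \ell(y,F_n(x,Z_n))$, and transposing gives $(vw^\top)^\top = wv^\top = F_{k-1}(x,Z_{k-1})L(y,F_n(x,Z_n))D_nW_n\cdots D_{k+1}W_{k+1}D_k$, which is the first claimed identity.

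For the bias the same argument applies with $\rmD_{W_k}$ replaced by $\rmD_{b_k}$: the chain rule and Proposition \ref{prop:gradients_preloss} give $\rmD_{b_k}\big(\ell(y,F_n(x,Z_n))\big)(h) = L(y,F_n(x,Z_n))D_nW_n\cdots D_{k+1}W_{k+1}D_kh = v^\top h$ for all $h\in\R^{d_k}$, so that $v$ is the gradient and $v^\top$ its transpose, which is the second claimed identity. I expect the only point requiring care to be the bookkeeping of dimensions and transposes in the trace manipulation of the third paragraph; the rest is direct substitution into results already established.
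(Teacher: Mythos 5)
Your proposal is correct and follows essentially the same route as the paper: peel off the loss with the chain rule, insert Proposition \ref{prop:gradients_preloss}, rewrite the resulting scalar as a trace using cyclic invariance, and identify the gradient via the duality of Remark \ref{rmk:duality}. Your explicit bookkeeping with $v$ and $w$ and the final transposition is just a more detailed spelling-out of the paper's trace computation.
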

For clarification of notation note that for $k=n$ the product $D_nW_n\cdot\ldots\cdot D_{k+1}W_{k+1}$ is empty and we therefore have $\rmD_{W_n} \ell(y,F_n))^\top= F_{n-1}(x,Z_{n-1})L(y,F_n)D_n$ as well as $\rmD_{b_n}\ell(y,F_n)^\top=L(y,F_n)D_n$.
\begin{proof}
Again, we will suppress the arguments of the function $F_n$ occasionally for better readability. By Proposition \ref{prop:gradients_preloss} we have for all $H\in \R^{d_k\times d_{k-1}}$ and $k=1,\ldots,n$ that
\begin{align*}
&\rmD_{W_k}\ell(y,F_n(x,Z_n))(H)=\rmD_2\ell(y,F_n)\circ \rmD_{W_k}F(x,Z_n)(H) \\
=\,&L(y,F_n)D_nW_n\ldots D_{k+1}W_{k+1}D_kHF_{k-1} \\
=\,&\tr\big( F_{k-1}L(y,F_n)D_nW_n\ldots D_{k+1}W_{k+1}D_kH \big)
\end{align*}
where in the last step we have used that the trace is invariant under cyclic permutations. The statement now follows by the duality mentioned in Remark \ref{rmk:duality}. The derivation of $\rmD_{b_k}F(x,Z_n)$ is straightforward.
\end{proof}
\begin{algorithm}[h!]
\caption{Computation of gradients $G_1,g_1,\ldots,G_n,g_n$ of the loss of a neural network $F_n$, defined by (\ref{eqn:layer_activations}), with respect to $W_1,b_1,\ldots,W_n,b_n$, respectively, for an input $x\in\R^{d_0}$ and a target label $y\in\R^{d_n}$. In the forward pass set $\sigma_0:=\id$.}
\begin{algorithmic}
\STATE $a_0\gets x$  \hfill $\diamond $ forward pass
\FOR{$k=1,\ldots,n$} 
\STATE{$a_k\gets W_k\cdot\sigma_{k-1}(a_{k-1})+b_k$} 
\ENDFOR
\STATE $g_n\gets \diag(\sigma_n'(a_n))\cdot L(y,\sigma_n(a_n))^\top$ \hfill $\diamond $ backward pass\\
$G_n\gets g_n \cdot\sigma_{n-1}(a_{n-1})^\top$ 
\FOR{$k=n-1,\ldots,1$}
\STATE $g_k\gets \diag(\sigma_k'(a_k))\cdot W_{k+1}^\top\cdot g_{k+1}$ \\
$G_k\gets g_k\cdot \sigma_{k-1}(a_{k-1})^\top$
\ENDFOR
\end{algorithmic}
\label{alg:NNgrads}
\end{algorithm}
The recursive pattern of the expressions for the gradients in Theorem \ref{thm:nn_grads} easily yield an efficient procedure for computing them, see Algorithm \ref{alg:NNgrads}.

\newpage
\begin{rmk}
There are several ways to improve the efficiency of Algorithm \ref{alg:NNgrads}:
\begin{enumerate}
\item To improve memory efficiency of Algorithm \ref{alg:NNgrads}, the application of $\sigma_k$ on $a_k$ is recomputed during the backward pass. As the application of the non-linearity is very inexpensive, this is often preferable to storing both $a_k$ and $\sigma(a_k)$ during the forward pass. If memory consumption is no limiting constraint one can store both the values $\sigma_k(a_k)$ and $\sigma_k'(a_k)$ instead of $a_k$ during the forward pass to maximize the speed of computation.

If $\sigma_k=\max\{0,\cdot\}$ is the ReLU activation function this tradeoff between memory and time complexity can be avoided by utilizing the identity $\sigma_k'(\sigma_k(x))=\sigma_k'(x)$ for all $x\in\R$, where $\sigma_k'=\mathds{1}_{(0,\infty)}$. During the forward pass the values $F_k:=\sigma(a_k)$ can be stored instead. Accordingly, the variables $\sigma_k'(F_k)$ are used instead of $\sigma_k'(a_k)$ during the backward pass. The same holds for the sigmoid activation function $\sigma_k(x)=1/(1+\e^x)$, which satisfies $\sigma_k'(x)=\sigma_k(x)(1-\sigma_k(x))$. Here we can replace $\sigma_k'(a_k)$ with $F_k\odot(\mathbf{1}-F_k)$ in the backward pass, where $\odot$ denotes the pointwise product of vectors and $\mathbf{1}$ denotes the vector of ones of appropriate size. Also, the hyperbolic tangent activation function $\sigma_k=\tanh$ satisfies $\sigma_k'(x)=1-\sigma_k(x)^2$, so we can replace $\sigma_k'(a_k)$ with $\mathbf{1}-F_k\odot F_k$ during the backward pass.
\item As the gradients for the weights $W_k$ are the outer product of the $d_{k}$-dimensional vector $g_k$ and the $d_{k-1}$-dimensional vector $\sigma_{k-1}(a_{k-1})$ they can be stored with $\O(d_{k-1}+d_k)$ space instead of $\O(d_{k-1}d_k)$ space.
\item At the end of each loop cycle in the backward pass the variable $a_k$ can be deleted. If the weights and biases are updated immediately via $W_k\gets W_k-\eta G_k$ and $b_k\gets b_k-\eta g_k$, where $\eta>0$ is the learning rate, the weight gradient $G_k$ can be deleted immediately.
\end{enumerate}
\end{rmk}

\section{General Layer Transformations}\label{sec:conv_layers}
Note that the only properties that we have actually used in the analysis of Section \ref{sec:neural_networks} was that the transformation in each layer consists of a bilinear map (with respect to the weights and the layers input) and a bias, followed by a pointwise non-linearity. In this section we want to generalize the results of Section \ref{sec:neural_networks} to general bilinear transformations per layer, which most importantly include convolutional layers. To emphasize the similarity to the analysis in Section \ref{sec:neural_networks} we try to keep the notation in this section as close as possible to the notation in Section \ref{sec:neural_networks}. In the following, given an index set $I$ let $\R^I$ be the real vector space consisting of vectors $x=(x_i)_{i\in I}$ with entries indexed by $I$. Equipped with the inner product
\begin{equation*}
\inner{x,y}_{\R^{I}}:=\sum_{i\in I}x_iy_i \quad \text{ for } x,y\in\R^I,
\end{equation*}
$\R^I$ becomes a Hilbert space.

 In convolutional networks one usually has layer data of the form $\R^{d_1\times d_2\times c}$ for $d_1\times d_2$-pixel images with $c$ channels. We include these types of layers using the index set $I=\{1,\ldots,d_1\}\times\{1,\ldots,d_2\}\times\{1,\ldots,c\}$

We proceed by describing the neural networks layer transformations. Let $I_0,I_k,J_k,K_k, $ for $k=1,\ldots,n$ be index sets. Let $C_k:\R^{I_{k-1}}\times \R^{J_k}\to\R^{I_k}$ be a bilinear map and $\iota_k:\R^{K_k}\to\R^{I_k}$ a linear map. We again define the layer-wise transformation by
\begin{equation*}
f_k(x,W_k,b_k):=\sigma_k\big(C_k(x,W_k)+\iota_k(b_k)\big)
\end{equation*}
and $F_k$ as in Equation (\ref{eqn:layer_activations}) using the adjusted $f_k$.

To explain the necessity of $\iota$, consider again the example of a convolutional layer of the form $\R^{d_1\times d_2\times c}$. Here, one usually has only one bias per channel. We can realize this as follows: For $k=1,\ldots,c$ let $\beta^{(k)}\in\R^{d_1\times d_2\times c}$ be defined by $\beta_{i,j,k}^{(k)}=1$ for all $i,j$ and $\beta_{i,j,l}^{(k)}=0$ for all $i,j$ if $l\neq k$. Now define $\iota:\R^c\to\R^{d_1\times d_2\times c}$ by
\begin{equation*}
\iota(b):=\sum_{k=1}^c b_k\beta^{(k)}.
\end{equation*}

Similar to Lemma \ref{lem:derivates_layers} one can easily show using the chain rule that
\begin{align*}
\rmD_1 f_k(x,W_k,b_k)(h)&=\sigma_k'\big( C_k(x,W_k)+\iota_k(b_k)\big) \odot C_k(h,W_k), \\
\rmD_2 f_k(x,W_k,b_k)(H)&=\sigma_k'\big( C_k(x,W_k)+\iota_k(b_k)\big) \odot C_k(x,H), \\
\rmD_3 f_k(x,W_k,b_k)(h)&=\sigma_k'\big( C_k(x,W_k)+\iota_k(b_k)\big) \odot \iota_k(h).
\end{align*}
The following proposition is an analogue of Proposition \ref{prop:gradients_preloss}.
\begin{prop}\label{prop:derivatives_conv_net}
For $k=1,\ldots,n$ let $T_k:\R^{I_{k-1}}\to\R^{I_k}$ and $S_k:\R^{J_k}\to\R^{I_k}$ be defined by
\begin{align*}
T_k h:=D_k\odot C_k(h,W_k),  \quad S_kH:= D_k\odot C_k(F_{k-1},H), \quad R_kh:= D_k\odot \iota_k(h),
\end{align*}
where $D_k:=\sigma_k'\big( C_k(F_{k-1}(x,Z_{k-1}),W_k)+\iota_k(b_k)\big)$. Then for $k=1,\ldots,n$ we have
\begin{align*}
\rmD_{W_k} F_n(x,Z_n)&=T_n\circ\ldots\circ T_{k+1}\circ S_k, \\
\rmD_{b_k} F_n(x,Z_n)&=T_n\circ\ldots\circ T_{k+1}\circ R_k.
\end{align*}
\end{prop}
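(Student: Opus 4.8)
The plan is to mirror the proof of Proposition \ref{prop:gradients_preloss} exactly, replacing the concrete formulas from Lemma \ref{lem:derivates_layers} with the abstract derivative formulas for the generalized $f_k$ stated just above, and noting that the linear operators $T_k$, $S_k$, $R_k$ are precisely the operator-theoretic repackaging of those formulas. First I would record that, by the recursive definition $F_m(x,Z_m) = f_m(F_{m-1}(x,Z_{m-1}),W_m,b_m)$, the weight $W_k$ enters $F_m$ for $m > k$ only through the first argument $F_{m-1}$, while at layer $k$ itself it enters through the second argument. Hence by the partial-derivative chain rule,
\begin{equation*}
\rmD_{W_k} F_m(x,Z_m) = \rmD_1 f_m(F_{m-1},W_m,b_m)\circ \rmD_{W_k}F_{m-1}(x,Z_{m-1}) \quad\text{for } m>k,
\end{equation*}
and $\rmD_{W_k}F_k(x,Z_k) = \rmD_2 f_k(F_{k-1},W_k,b_k)$. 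Unrolling this recursion from $m=n$ down to $m=k$ gives
\begin{equation*}
\rmD_{W_k} F_n(x,Z_n) = \rmD_1 f_n(F_{n-1},W_n,b_n)\circ\ldots\circ \rmD_1 f_{k+1}(F_k,W_{k+1},b_{k+1})\circ \rmD_2 f_k(F_{k-1},W_k,b_k),
\end{equation*}
which is the direct analogue of the display in the proof of Proposition \ref{prop:gradients_preloss}.

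The second step is purely a matter of identification: by the derivative formulas for the generalized layer map, $\rmD_1 f_m(F_{m-1},W_m,b_m)(h) = \sigma_m'(C_m(F_{m-1},W_m)+\iota_m(b_m))\odot C_m(h,W_m) = D_m \odot C_m(h,W_m) = T_m h$, so $\rmD_1 f_m(F_{m-1},W_m,b_m) = T_m$ for each $m$, and similarly $\rmD_2 f_k(F_{k-1},W_k,b_k) = S_k$ and $\rmD_3 f_k(F_{k-1},W_k,b_k) = R_k$. Substituting these identifications into the composed expression yields $\rmD_{W_k}F_n(x,Z_n) = T_n\circ\ldots\circ T_{k+1}\circ S_k$, which is the first claimed formula. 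The bias statement is obtained by the identical argument with $W_k$ replaced by $b_k$: the bias $b_k$ enters $F_m$ for $m>k$ only through the first argument and at layer $k$ through the third argument, so $\rmD_{b_k}F_n(x,Z_n) = \rmD_1 f_n\circ\ldots\circ \rmD_1 f_{k+1}\circ \rmD_3 f_k = T_n\circ\ldots\circ T_{k+1}\circ R_k$.

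There is no serious obstacle here; the content of the proposition is essentially bookkeeping once the abstract derivative formulas for $f_k$ are in hand. The one point that deserves a sentence of care is well-definedness of the compositions as bounded linear operators — that is, checking the domains and codomains match up: $S_k$ maps $\R^{J_k}\to\R^{I_k}$, then each $T_m$ maps $\R^{I_{m-1}}\to\R^{I_m}$, so $T_n\circ\ldots\circ T_{k+1}\circ S_k$ is a well-defined operator $\R^{J_k}\to\R^{I_n}$, matching $\rmD_{W_k}F_n(x,Z_n)\in\cL(\R^{J_k},\R^{I_n})$; since all spaces are finite-dimensional, boundedness is automatic. The only genuinely substantive input, namely that $C_k$ being bilinear forces $\rmD_1 f_k$ and $\rmD_2 f_k$ to have the stated form and that $\iota_k$ being linear forces the stated form of $\rmD_3 f_k$, has already been established in the excerpt, so the proof reduces to invoking the chain rule and these identities. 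I would keep the write-up to a few lines, explicitly flagging the parallel with Proposition \ref{prop:gradients_preloss} so the reader sees that nothing new is happening beyond replacing matrix products by operator compositions.
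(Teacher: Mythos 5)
Your proposal is correct and follows exactly the route the paper intends: the paper states this proposition as an analogue of Proposition \ref{prop:gradients_preloss} and omits the proof, which is precisely your argument of unrolling the chain-rule recursion and identifying $\rmD_1 f_m=T_m$, $\rmD_2 f_k=S_k$, $\rmD_3 f_k=R_k$. Nothing is missing; the domain/codomain check is a reasonable extra sentence but not essential.
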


Let $\cH_1$ and $\cH_2$ be Hilbert spaces and let $T\in\cL(\cH_1,\cH_2)$ be a bounded, linear operator. The adjoint operator $T^*\in\cL(\cH_2,\cH_1)$ of $T$ is the unique operator satisfying $\inner{Tx,y}_{\cH_2}=\inner{x,T^*y}_{\cH_1}$ for all $x\in\cH_1$ and $y\in\cH_2$.

Let $\ell:\R^{I_n}\times \R^{I_n}\to\R$ be a loss function and
\begin{equation*}
L(y,t):=\left( \frac{\partial \ell(y,t)}{\partial t_i}\right)_{i\in I_n}\in\R^{I_n}.
\end{equation*}

\begin{thm}\label{thm:conv_net_grads}
Let $T_k,S_k,R_k,k=1,\ldots,n$ be defined as in Proposition \ref{prop:derivatives_conv_net}. Then for all $k=1,\ldots,n$ we have
\begin{align*}
\rmD_{W_k}\ell(y,F_n(x,Z_n))=S_k^*\circ T_{k+1}^*\circ\ldots\circ T_n^*(L(y,F_n)), \\
\rmD_{b_k}\ell(y,F_n(x,Z_n))=R_k^*\circ T_{k+1}^*\circ\ldots\circ T_n^*(L(y,F_n)).
\end{align*}
\end{thm}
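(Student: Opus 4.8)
The plan is to mimic the proof of Theorem \ref{thm:nn_grads} exactly, replacing transposes of matrices with adjoints of operators and the Frobenius-trace duality with the Hilbert space duality of Remark \ref{rmk:duality}. First I would fix $k\in\{1,\ldots,n\}$ and $H\in\R^{J_k}$ and compute, using the chain rule together with the formula for $L$,
\begin{align*}
\rmD_{W_k}\ell(y,F_n(x,Z_n))(H)
&=\rmD_2\ell(y,F_n)\circ\rmD_{W_k}F_n(x,Z_n)(H)\\
&=\inner{L(y,F_n),\,(T_n\circ\ldots\circ T_{k+1}\circ S_k)(H)}_{\R^{I_n}},
\end{align*}
where the last equality uses Proposition \ref{prop:derivatives_conv_net} and the definition of $L$ via the inner product on $\R^{I_n}$.

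Next I would repeatedly apply the defining property of the adjoint, $\inner{Tx,y}=\inner{x,T^*y}$, to move each operator $T_j$ and then $S_k$ across the inner product one at a time:
\begin{equation*}
\inner{L(y,F_n),\,T_n\cdots T_{k+1}S_k H}_{\R^{I_n}}
=\inner{S_k^*\circ T_{k+1}^*\circ\ldots\circ T_n^*(L(y,F_n)),\,H}_{\R^{J_k}}.
\end{equation*}
Since this holds for all $H\in\R^{J_k}$, the Riesz representation / duality identification in Remark \ref{rmk:duality} lets me conclude that $\rmD_{W_k}\ell(y,F_n(x,Z_n))$, viewed as an element of $\R^{J_k}$, equals $S_k^*\circ T_{k+1}^*\circ\ldots\circ T_n^*(L(y,F_n))$, which is the first claimed identity. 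The bias case is identical with $S_k$ replaced by $R_k$ and $\R^{J_k}$ replaced by $\R^{K_k}$, and the boundary case $k=n$ (empty product of $T_j$'s) needs no separate argument.

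I expect the only real subtlety to be bookkeeping rather than a genuine obstacle: making sure that the composition of adjoints reverses order correctly, i.e. $(T_n\circ\ldots\circ T_{k+1}\circ S_k)^* = S_k^*\circ T_{k+1}^*\circ\ldots\circ T_n^*$, and that each $T_j$, $S_k$, $R_k$ is genuinely a bounded linear operator between the relevant finite-dimensional Hilbert spaces $\R^{I_{j-1}}\to\R^{I_j}$, $\R^{J_k}\to\R^{I_k}$, $\R^{K_k}\to\R^{I_k}$ so that adjoints exist (this is immediate in finite dimensions, since $C_k(\cdot,W_k)$, $C_k(F_{k-1},\cdot)$ and $\iota_k$ are linear and $D_k\odot(\cdot)$ is linear). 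One could additionally record the explicit adjoint formulas — $T_k^* v = C_k^{(1)}{}^{*}(D_k\odot v)$ in terms of the partial map $C_k^{(1)}:=C_k(\cdot,W_k)$, and similarly for $S_k^*$, $R_k^*$, using that $\diag$-multiplication is self-adjoint — but this is not needed for the statement and I would leave it as a remark. Hence the whole argument is a two-line chain-rule-plus-adjoint computation, parallel to the proof of Theorem \ref{thm:nn_grads}.
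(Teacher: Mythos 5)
Your proposal is correct and follows essentially the same route as the paper: chain rule to reduce to $\inner{L(y,F_n),\rmD_{W_k}F_n(x,Z_n)(H)}_{\R^{I_n}}$, then Proposition \ref{prop:derivatives_conv_net} together with the adjoint identity and the duality of Remark \ref{rmk:duality}. You merely spell out the order-reversal of adjoints that the paper leaves implicit, which is fine.
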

\begin{proof}
By the chain rule we have
\begin{align*}
\rmD_{W_k}\ell(y,F_n(x,Z_n))H&=\rmD_2\ell(y,F_n)\circ \rmD_{W_k}\ell(y,F_n)(H) \\
&=\inner{L(y,F_n),\rmD_{W_k}\ell(y,F_n)(H)}_{\R^{I_n}}.
\end{align*}
The result now follows from Proposition \ref{prop:derivatives_conv_net} combined with Remark \ref{rmk:duality}.
\end{proof}
For operators $T:\R^I\to\R^J$ the adjoint $T^*:\R^J\to\R^I$ can be computed explicitly. To this end, we denote the standard basis of $\R^I$ by $e_i,i\in I$, where each $e_i$ is defined by having a 1 at index $i$ and 0 else. For $x\in \R^J$ we have
\begin{equation}
T^*x=\sum_{i\in I} \inner{T^*x,e_i}_{\R^I} e_i=\sum_{i\in I} \inner{x,Te_i}_{\R^J} e_i.\label{eqn:adjoint_action}
\end{equation}

Let $C_k^\dagger:\R^{I_k}\times \R^{J_k}\to\R^{I_{k-1}}$ be the adjoint of $C_k$ with respect to its first argument and $C_k^{\dagger\dagger}:\R^{I_{k-1}}\times \R^{I_k}\to\R^{J_k}$ with respect to its second argument. For the operators $T_k,S_k,R_k$ from Proposition \ref{prop:derivatives_conv_net} and Theorem \ref{thm:conv_net_grads} we then have
\begin{align*}
T_k^*h&=C_k^{\dagger}\big( D_k\odot h,W_k \big), \\
S_k^*H&=C_k^{\dagger\dagger}\big( F_{k-1},D_k\odot H \big), \\
R_k^*h&=\iota_k^*\big(D_k\odot h\big),
\end{align*}
which can be computed using Equation (\ref{eqn:adjoint_action}). Using this notation, Theorem \ref{thm:conv_net_grads} then easily yields Algorithm \ref{alg:NNgrads_conv}.
\begin{algorithm}[h!]
\caption{Computation of gradients $G_1,g_1,\ldots,G_n,g_n$ of the loss of a neural network $F_n$ as defined in Section \ref{sec:conv_layers}, with respect to $W_1,b_1,\ldots,W_n,b_n$, respectively, for an input $x\in\R^{I_0}$ and a target label $y\in\R^{I_n}$. In the forward pass set $\sigma_0:=\id$.}
\begin{algorithmic}
\STATE $a_0\gets x$  \hfill $\diamond $ forward pass
\FOR{$k=1,\ldots,n$} 
\STATE{$a_k\gets C_k(\sigma_{k-1}(a_{k-1}),W_k)+\iota_k(b_k)$} 
\ENDFOR
\STATE $T\gets \sigma_n'(a_n)\odot L(y,\sigma_n(a_n))$ \hfill $\diamond $ backward pass\\
$g_n\gets \iota_n^*(T)$ \\
$G_n\gets C_n^{\dagger\dagger}(\sigma_{n-1}(a_{n-1}),T)$ 
\FOR{$k=n-1,\ldots,1$}
\STATE $T\gets C_{k+1}^\dagger(T,W_{k+1})\odot \sigma_k'(a_k)$ \\
$g_k\gets \iota_k^*(T)$ \\
$G_k\gets C_k^{\dagger\dagger}(\sigma_{k-1}(a_{k-1}),T)$
\ENDFOR
\end{algorithmic}
\label{alg:NNgrads_conv}
\end{algorithm}

\bibliography{bib}

\end{document}